\definecolor{codegreen}{rgb}{0,0.6,0}
\definecolor{codegray}{rgb}{0.5,0.5,0.5}
\definecolor{codepurple}{rgb}{0.58,0,0.82}
\definecolor{backcolour}{rgb}{1,1,1}
\definecolor{red}{rgb}{1,0,0}
\definecolor{green}{rgb}{0,1,0}
\definecolor{yellow}{rgb}{1,1,0}
\definecolor{orange}{rgb}{1,0.647,0}
\definecolor{gold}{rgb}{1,0.843,0}
\definecolor{purple}{rgb}{0.627,0.125,0.941}
\definecolor{gray}{rgb}{0.745,0.745,0.745}
\definecolor{brown}{rgb}{0.647,0.165,0.165}
\definecolor{navy}{rgb}{0,0,0.502}
\definecolor{pink}{rgb}{1,0.753,0.796}
\definecolor{seagreen}{rgb}{0.18,0.545,0.341}
\definecolor{turquoise}{rgb}{0.251,0.878,0.816}
\definecolor{violet}{rgb}{0.933,0.51,0.933}
\definecolor{darkblue}{rgb}{0,0,0.545}
\definecolor{darkcyan}{rgb}{0,0.545,0.545}
\definecolor{darkgreen}{rgb}{0,0.392,0}
\definecolor{darkmagenta}{rgb}{0.545,0,0.545}
\definecolor{darkorange}{rgb}{1,0.549,0}
\definecolor{darkred}{rgb}{0.545,0,0}
\definecolor{lightblue}{rgb}{0.678,0.847,0.902}
\definecolor{lightcyan}{rgb}{0.878,1,1}
\definecolor{lightgray}{rgb}{0.827,0.827,0.827}
\definecolor{lightgreen}{rgb}{0.565,0.933,0.565}
\definecolor{lightyellow}{rgb}{1,1,0.878}
\definecolor{black}{rgb}{0,0,0}
\definecolor{white}{rgb}{1,1,1}
\lstdefinestyle{mystyle}{
    backgroundcolor=\color{backcolour},   
    commentstyle=\color{codegreen},
    keywordstyle=\color{magenta},
    numberstyle=\tiny\color{codegray},
    stringstyle=\color{codepurple},
    basicstyle=\ttfamily\footnotesize,
    breakatwhitespace=false,         
    breaklines=true,                 
    captionpos=b,                    
    keepspaces=false,                 
    numbers=left,                    
    showspaces=false,                
    showstringspaces=false,
    showtabs=false,                  
    tabsize=1,
}
\newcommand{\lwdth}{2pt}
\def\thickhline{%
  \noalign{\ifnum0=`}\fi\hrule \@height \thickarrayrulewidth \futurelet
   \reserved@a\@xthickhline}
\def\@xthickhline{\ifx\reserved@a\thickhline
               \vskip\doublerulesep
               \vskip-\thickarrayrulewidth
             \fi
      \ifnum0=`{\fi}}
\newlength{\thickarrayrulewidth}
\newcommand{\PreserveBackslash}[1]{\let\temp=\\#1\let\\=\temp}
\newcolumntype{C}[1]{>{\PreserveBackslash\centering}p{#1}}
\newcolumntype{R}[1]{>{\PreserveBackslash\raggedleft}p{#1}}
\newcolumntype{L}[1]{>{\PreserveBackslash\raggedright}p{#1}}
\definecolor{mplblue}{RGB}{31, 119, 180}
\definecolor{arxblue}{RGB}{48, 51, 154}
\definecolor{mplred}{RGB}{214, 39, 40}
\definecolor{mplgrey}{RGB}{127, 127, 127}
\definecolor{darkgray}{rgb}{0.663,0.663,0.663}
\newcommand{\Input}{\textbf{Input}: }
\renewcommand{\Output}{\textbf{Output}: }
\newcommand{\Inputs}{\textbf{Inputs}: }
\newcommand{\Parameters}{\textbf{Parameters}: }
\def\@endtheorem{\endtrivlist}%
\newtheoremstyle{mytheorem}%
  {3pt}%
  {3pt}%
  {\itshape}%
  {}%
  {\itshape\bfseries}%
  {:}%
  {.5em}%
  {\thmname{#1}\normalfont\itshape\thmnumber{\@ifnotempty{#1}{ }#2}%
   \thmnote{ {\the\thm@notefont(\itshape #3)}}}%
\theoremstyle{mytheorem}
\newtheorem{lemma}{Lemma}
\newtheorem{problem}{Problem}
\newtheorem{proposition}{Proposition}
\newtheorem{assumption}{Assumption}
\newtheoremstyle{def}%
  {3pt}%
  {3pt}%
  {}%
  {}%
  {\itshape\bfseries}%
  {:}%
  {.5em}%
  {\thmname{#1}\normalfont\itshape\thmnumber{\@ifnotempty{#1}{ }#2}%
   \thmnote{ {\the\thm@notefont(\itshape #3)}}}%
\theoremstyle{def}
\newtheorem{definition}{Definition}
\newcommand{\gauss}[2]{\mathcal{N}(#1, #2)}
\newcommand{\tee}[1]{\texttt{#1}}
\newcommand{\mc}[1]{\mathcal{#1}}
\newcommand{\ith}{i^{\text{th}}}
\newcommand{\kth}{k^{\text{th}}}
\newcommand{\jth}{j^{\text{th}}}
\newcommand{\cvar}{{\operatorname{CVaR}_{\alpha}}}
\newcommand{\var}{{\operatorname{VaR}_{\alpha}}}
\newcommand{\V}{\mathcal{V}}
\newcommand{\E}{\mathcal{E}}
\newcommand{\xfree}{\operatorname{\mathcal{X}_\text{free}}}
  \def\\{}%
  \def\texttt#1{<#1>}%
  \def\cref#1{<#1>}%
\def\@opargbegintheorem#1#2#3{\trivlist
   \item[]{\itshape #1\ #2\ (#3)}\\}
\newcommand{\Xobs}{{\mathcal{X}_{\text{o}}}}
\newcommand{\xgoal}{x_{\text{goal}}}
\newcommand{\Xgoal}{\mc{X}_{\text{goal}}}
\newcommand{\xinit}{x_{\text{start}}}
\newcommand{\nmax}{N_{\text{max}}} %
\newcommand{\Prob}[2][]{%
  \operatorname{Pr}\ifthenelse{\isempty{#1}}{}{\left(#2\right)}%
}
\newlist{prbdes}{itemize}{1}
\setlist[prbdes]{label={${\mathcal{P}_1}$:\hspace*{-12pt}}}
\patchcmd{\enit@itemize@i}{\fi}{\fi\ifnum\pdfstrcmp{\@currenvir}{prbdes}=0 \item\fi}{}{}
\newcommand{\revadd}[1]{{#1}}
\newcommand{\dgridx}{{d_{x}}}
\newcommand{\dgridy}{{d_{\text{y}}}}
\newcommand{\vertmat}[2]{\begin{bmatrix}#1\\#2\end{bmatrix}}
\newcommand{\polyobs}[1]{%
  \ifthenelse{\isempty{#1}}{\mathcal{P}_o}{\mathcal{P}_{o,#1}}%
}
\newcommand{\ellipsobs}[1]{%
  \ifthenelse{\isempty{#1}}{\mathcal{E}_o}{\mathcal{E}_{o,#1}}%
}
\newcommand{\polyobsnoisek}[1]{%
  \ifthenelse{\isempty{#1}}{\delta^\mathcal{P}_{k,t}}{\delta^\mathcal{P}_{k,#1}}%
}
\newcommand{\ellipsobsnoisel}[1]{%
  \ifthenelse{\isempty{#1}}{\delta^\mathcal{E}_{l,t}}{\delta^\mathcal{E}_{l,#1}}%
}
\newcommand{\xnear}{x_{\text{near}}}
\newcommand{\xnearest}{x_{\text{nearest}}}
\newcommand{\xnew}{x_{\text{new}}}
\newcommand{\xrand}{x_{\text{rand}}}
\newcommand{\normdiff}[2]{\lvert\lvert #1 - #2\rvert\rvert}
\newcommand{\norm}[1]{\lvert\lvert #1 \rvert\rvert}
\newcommand{\setz}[2]{\mathbb{Z}_{{#1},{#2}}}
\newcommand{\setr}[2]{\mathbb{R}_{{#1},{#2}}}
\newcommand{\erfc}[1]{%
  \ifthenelse{\isempty{#1}}{\operatorname{erfc}}{\operatorname{erfc}\left(#1\right)}%
}
\newcommand{\obsappx}{\Tilde{\mc{O}}}
\newcommand{\numobsappx}{m_{\obsappx}}
\newcommand{\maxiter}{\nmax}
\newcommand{\numcostdistsamp}{n^k_c}
\newcommand{\rneighbors}{R_{m}}
\newcommand{\xmincvar}{x_{\min}}
\newcommand{\expt}{\mathbb{E}}
\newcommand{\variance}{\operatorname{Var}}
\newcommand{\scs}[1]{{\scshape{#1}}}
\newcommand{\rrad}{{R}_{\text{rb}}}
\newcommand{\Lagr}{L} %
\newcommand{\op}[1]{\operatorname{#1}}
\newcommand{\nrarrt}{n_{\mathrm{RA}}}
\newcommand{\ncvark}{n^k_{\mathrm{CVaR}}}
\newcommand{\mcrot}[4]{\multicolumn{#1}{#2}{\rlap{\rotatebox{#3}{#4}~}}} 
\newcommand*{\twoelementtable}[3][l]%
{%
    \begin{tabular}[t]{@{}#1@{}}%
        #2\tabularnewline
        #3%
    \end{tabular}%
}
\newcommand{\lmax}{\Lagr_{\mathrm{max}}}
\newcommand{\lworst}{\Lagr_{\mathrm{worst}}}
\newcommand{\Jcvar}{J_{\mathrm{CVaR}}}
\algrenewcommand\algorithmicforall{\textbf{for each}}
\newcommand{\rewirerad}{\rho}
\newcommand{\maxrewirerad}{\rho_{\mathrm{max}}}
\newcolumntype{f}{>{\centering\arraybackslash}p{.2\columnwidth}}
\newcolumntype{s}{>{\centering\arraybackslash}p{.08\columnwidth}}
\newcolumntype{z}{>{\centering\arraybackslash}p{.1\columnwidth}}
\newcommand{\muell}{\mu_{\Lagr}}
\newcommand{\sigmaell}{\sigma_{\Lagr}}
\newcommand{\sigmaellsq}{{\sigma^2}_{\Lagr}}
\newcommand{\muellstar}{\mu_{\lworst^\star}}
\newcommand{\sigmaellstar}{\sigma_{\lworst^\star}}
\newcommand{\sigmaellstarsq}{{\sigma^2}_{\lworst^\star}}
\newcommand{\len}[1]{%
  \ifthenelse{\isempty{#1}}{\ell}{\ell\left(#1\right)}%
}
\newcommand{\lstarworst}{\lworst^\star}
\title{\textbf{Robust Stochastic Shortest-Path Planning via Risk-Sensitive Incremental Sampling}\textsuperscript{*}
\thanks{\textsuperscript{*}The material in this work received support in part from the Office of Naval Research (Grant No. N000141712622) and from a seed grant awarded by Northrop Grumman Corporation.}
\thanks{\textsuperscript{1}The authors are with the Department of Electrical \& Computer Engineering and the Institute for Systems Research, University of Maryland, College Park, MD.
Emails: \{\texttt{enwerem, enoorani, baras}\}@umd.edu.}
\thanks{\textsuperscript{2}B. M. Sadler is with the University of Texas at Austin, TX. Email: {brian.sadler}@ieee.org.
}
}
\author{{Clinton Enwerem\textsuperscript{1}}
\and
{Erfaun Noorani\textsuperscript{1}}
\and
{John S. Baras\textsuperscript{1}}
\and
{Brian M. Sadler\textsuperscript{2}}
}
\begin{document}

\maketitle
\thispagestyle{empty}
\pagestyle{empty}

\begin{abstract}
With the pervasiveness of Stochastic Shortest-Path (SSP) problems in high-risk industries, such as last-mile autonomous delivery and supply chain management, robust planning algorithms are crucial for ensuring successful task completion while mitigating hazardous outcomes. Mainstream chance-constrained incremental sampling techniques for solving SSP problems tend to be overly conservative and typically do not consider the likelihood of undesirable tail events. We propose an alternative risk-aware approach inspired by the asymptotically-optimal Rapidly-Exploring Random Trees (RRT*) planning algorithm, which selects nodes along path segments with minimal Conditional Value-at-Risk (CVaR). Our motivation rests on the step-wise coherence of the CVaR risk measure and the optimal substructure of the SSP problem. Thus, optimizing with respect to the CVaR at each sampling iteration necessarily leads to an optimal path in the limit of the sample size. We validate our approach via numerical path planning experiments in a two-dimensional grid world with obstacles and stochastic path-segment lengths. Our simulation results show that incorporating risk into the tree growth process yields paths with lengths that are significantly less sensitive to variations in the noise parameter, or equivalently, paths that are more robust to environmental uncertainty. Algorithmic analyses reveal similar query time and memory space complexity to the baseline RRT* procedure, with only a marginal increase in processing time. This increase is offset by significantly lower noise sensitivity and reduced planner failure rates.
\end{abstract}

\section{Introduction}\label{sec:intro}
Many practical robot motion planning tasks can be recast as shortest-path optimization problems, ranging from problems in autonomous drone delivery \cite{huang2020reliable} to search and rescue \cite{lin2009uav}. In the basic setting, the {Stochastic Shortest-Path} (SSP) planning problem \revadd{involves finding} a path of minimum \textit{expected} length \revadd{that connects} two obstacle-free robot configurations in a planning environment with obstacles and some notion of uncertainty. Such uncertainty may arise from the robot's sensing \cite{blackmore_probabilistic_2006, thrun_probabilistic_2005}, decision-making, or actuation modules \cite{tao_path_2022,williams_information_2017}. Furthermore, many established techniques for finding shortest paths --- such as graph search via Dijkstra's or the A* algorithm or sampling-based algorithms like the Rapidly-Exploring Random Trees (RRT) algorithm \cite{lavalle2001rapidly} and probabilistic roadmaps \cite{kavraki1996probabilistic} --- are fundamentally non-robust. The foregoing challenges thus provide motivation for an inquiry into the design of robust shortest-path planning algorithms that consider some form of uncertainty in the planning procedure and return adequately-adjusted plans.

In the \revadd{deterministic} path planning setting, the shortest-path problem can be solved efficiently, for instance, via graph-search or \revadd{sampling-based} algorithms, such as the asymptotically-optimal RRT (RRT*) algorithm \cite{karaman2011sampling}. In the stochastic case \cite{ahmadi_risk-averse_2021,bertsekas2013stochastic}, however, the (now) SSP problem becomes possibly intractable, so that probabilistic completeness is inevitably lost, motivating \textit{probabilistically-constrained} sampling algorithms. One such class of probabilistic methods proceeds by formulating the problem as a mathematical program subject to chance constraints \cite{blackmore_probabilistic_2006, luders_chance_2010}, where the \revadd{obstacles, the robot's states, or both} are represented by variables with noise-perturbed parameters. Convex relaxations of the chance constraints generally follow such formulations to make the problem tractable. Unfortunately, such relaxations have been reported to be conservative \cite{liu_incremental_2014,ahmadi_risk-averse_2021}. To avoid such conservatism, some studies adopt risk-sensitive methods \cite{hakobyan_risk-aware_2019,nishimura_rap_2023} that introduce risk awareness by computing paths via \textit{risk-constrained} optimization, where risk is quantified by some measure, e.g., the Conditional Value-at-Risk (CVaR). 

Risk-aware techniques take into account highly-unlikely yet probable worst-case events, and, as a consequence, hedge against these tail outcomes to an extent proportional to their probabilities. Robustness to stochasticity naturally emerges from such methods, as demonstrated in the risk-sensitive optimization literature \cite{rockafellar_optimization_2000,noorani_embracing_2022}. In more related work \cite{ahmadi_risk-averse_2021}, the authors reformulate the SSP problem as Difference Convex Programs (DCPs) with the decision variable set as a dynamic risk functional and for Markov state transitions. Similar to our work, the authors adopt the CVaR (along with other risk measures); their method, however, relies on a convex reformulation of the SSP, a sound but needless step in our approach. Moreover, from a practical standpoint, DCPs may be challenging to implement owing to the many intricate details involved in their construction and their reliance on sophisticated optimization tools. In contrast, the modular nature of our Risk-Aware RRT* (RA-RRT*) algorithm makes it amenable to applications. Finally, unique to our work is also a discussion on the computational implications of considering risk in SSP planning as quantified by algorithmic complexity in processing time, query time, and memory space, a fundamental but mostly neglected consideration in studies focused on risk-aware SSP planning \cite{gavriel2012risk,ketkov2023multistage}. 

\subsection{Contributions \& Organization}
\begin{enumerate}[leftmargin=*,label=\roman*.]
    \setlength{\itemsep}{-4pt}
    \item \textit{Robust SSP Planning via Risk-Sensitive RRT*} (\cref{sec:ra-rrt-star}): We introduce the RA-RRT* algorithm for solving the SSP problem set forth in \cref{sec:prelim}. Our algorithm is a risk-sensitive adaptation of the RRT* algorithm to the stochastic setting that expands the search tree by selecting nodes at each sampling iteration to minimize an empirically-computed CVaR\footnote{This stage-wise minimization is key to the asymptotic optimality of the RRT* algorithm \cite{karaman2011sampling}, and, by extension, our adaptation.}. In addition, we characterize the computational efficiency of RA-RRT* (in \cref{ssec:algcmplx}) through comprehensive algorithmic analyses that compare the processing, query, and space complexity with the baseline RRT* algorithm. 
    \item \textit{Probabilistic Guarantee on the Optimal Worst-Case Path Length} (\cref{ssec:probub}): We derive an upper bound on the probability of the optimal worst-case path length exceeding a prescribed threshold for a given confidence level, which provides a formal confirmation of the probabilistic robustness of our risk-sensitive approach. 
    \item \textit{Simulation Results} (\cref{sec:resdis}): Finally, we present results from comprehensive simulation studies (\cref{sec:exp}) that demonstrate the utility of our risk-sensitive approach to incremental sampling-based path planning. These results form the basis of attendant elaborate discussions that further validate the theoretical (\cref{ssec:probub}) and computational complexity (\cref{ssec:algcmplx}) results.
\end{enumerate}

{\textbf{Notation}}: The symbol $[x]_i$ denotes the $\ith$ element of the vector, $x \in \mathbb{R}^d$, with transpose, $x^{\top}$, while $[A]_{ij}$ denotes the element in the $\ith$ row and $\jth$ column of the $m \times n$ matrix, $A \in \mathbb{R}^{m\times n}$, with $1 \le i \le m$, $1 \le j \le n$, and where $m$ and $n$ are both integers greater than 1. The symbol $\norm{ \ }$ will denote the Euclidean norm, and we will denote the expectation and variance of a random variable, $Y$, with realization $y$, by $\expt[Y]$ and $\variance[Y]$, respectively. Lastly, $\setr{a}{b}$ and $\setz{a}{b}$ respectively denote the set of real numbers and the integers on $[a,b]$, $\mathcal{A}[i]$ is the $\ith$ element of the set $\mathcal{A}$, \revadd{with cardinality, $|\mathcal{A}|$, and $P(E)$ denotes the probability of event $E$}.

\section{Background \& Problem Formulation}\label{sec:prelim}
\subsection{Finding Shortest Paths in Planar Regions with Obstacles}
We begin by stating the problem of finding shortest paths in $\mathbb{R}^2$ with convex obstacles. Useful background material follows that characterizes the configuration space and defines relevant path-related objects.
\begin{definition}[\revadd{Free Space Representation} \& SSP Problem Particulars]\label{def:xfreerep}\noindent Let $\mc{X} \subset \mathbb{R}^2$ denote the non-empty compact set representing the configuration space (C-space) of the robot and denote its obstacle-free subset as $\xfree := \{x \in \mathcal{X} \ | \ x \notin \Xobs \}$, where $\Xobs$ is the set of all (convex) C-space obstacles. We will take $\xfree$ to be represented by a tree, $\mc{T}$, given by the tuple $(\V_{\mc{T}}, \E_{\mc{T}})$, with $\V_\mc{T}$ and $\E_\mc{T} := \{(\cdot, \star) \ | \ \cdot, \star \in \V_\mc{T}\}$ respectively denoting the tree's node and edge sets. 
\end{definition}
\begin{definition}[Path]\label{def:path}\noindent Let $\xinit \in \V_{\mc{T}}$ and $\xgoal \in \V_{\mc{T}}$ respectively denote the prescribed start and target (goal) nodes. We define a path, $p \in \mathcal{P}$, as a finite sequence of $N$ edges in $\E_{\mc{T}}$ (hereafter, path segment) connecting $\xinit$ and $\xgoal$, i.e., $p$ is the set $ \{(\xinit, x_1), (x_1, x_2), \ldots, (x_{N-1}, \xgoal) \mid p[\revadd{k}] \in \E_{\mc{T}}, \revadd{k \in \setz{0}{N-1}}\})$, with each path segment (\revadd{i.e., $p[k]$}) lying entirely in $\xfree$. The symbol $\mathcal{P}$ denotes the set of all such paths. To \revadd{$p[k]$}, we associate a non-negative real number, $\Lagr_k := \revadd{\len{p[k]}}$, which quantifies \revadd{its length as determined by the operator $\len{}:\E_{\mc{T}} \rightarrow \setr{0}{\infty}$ (see \cref{ssec:uncrsk} for the definition of $\len{}$).}
\end{definition}
The following assumption on finite-time goal reachability is required to ensure that the SSP problem is well-posed and also serves as a terminating condition for our algorithm.
\begin{assumption}[Finite-Time Goal Reachability]\label{asm:finreach}
     There exists a finite integer, $\nmax$, such that the goal can be reached from any initial node in $\xfree$ in at most $\nmax$ time steps. 
\end{assumption}
To impose \cref{asm:finreach}, we require that the iteration step, $N$, at which the goal is reached be bounded above by $\nmax$. We can now state the SSP problem formally.
\begin{problem}[SSP Problem]\label{prb:ssp}\noindent Assume the setting in \cref{def:path,def:xfreerep} and \cref{asm:finreach}. Suppose also that \revadd{$\Lagr_k$} is unknown but follows a known probability distribution denoted by $\mathbb{P}^{\theta}_k$, with parameter, $\theta$. Find a path, $p$, of minimum expected length such that $x_0=\xinit$ and $x_N=\xgoal$\footnote{For simplicity, we have elected to formulate the SSP problem using a single goal point in $\xfree$ rather than a goal region, $\Xgoal$. In the latter case, our algorithm carries over as long as one sets $N = \inf \left\{k \in \mathbb{Z}_{0, \infty} \mid x_k \in \mathcal{X}_{\text {goal}}\right\}$.}\textsuperscript{,}\footnote{We note here that there might not exist a solution to \cref{prb:ssp}, since the path segment lengths can become arbitrarily large for unbounded $\theta$, necessitating a compact set assumption on the parameter space.}.
\end{problem}
Formally, we can express \cref{prb:ssp} as the following mathematical program%
\begin{subequations}\label{eq:optprob}
\begin{align}\label{seq:optprobobj}
&\hspace{0.08\columnwidth} \min_{p\in\mathcal{P}} \expt\left[\Lagr\right] \\
\text {s.t.: }&x_0 = \xinit,  x_{N} = \xgoal, N \le \nmax,\\
& x_k \in \xfree, \revadd{\Lagr_k = \len{p[k]}}, \ \forall \ k \in \setz{0}{\revadd{\nmax-1}},
\end{align}
\end{subequations}
with $\Lagr:=\sum_{k=0}^{N-1} \Lagr_k$ denoting the total path length (i.e., the \textit{cost}) and where the expectation is computed with respect to the \revadd{underlying} uncertainty distribution.

\subsection{Uncertainty Quantification \& Risk Notion}\label{ssec:uncrsk}
\textbf{Uncertainty Quantification}: As remarked in \cref{sec:intro}, the inherent stochasticity in the SSP problem stems from varied sources that typically influence the uncertainty model. For instance, localization inaccuracy may be captured via belief-based probability density functions \cite{thrun_probabilistic_2005} and dynamic obstacles via ambiguity sets \cite{hakobyan_distributionally_2023}. A canonical direction is typically to assume partial or noise-corrupted environmental information \cite{luders_chance_2010,blackmore_probabilistic_2006,liu_incremental_2014,li2023model}. Such characterizations of uncertainty besides enabling mathematical precision are also of immediate practical relevance due to their ease of implementation. These reasons motivate our adoption of this particular uncertainty class. Specifically, we take the length of the $\kth$ path segment to be given by the following expression 
\begin{equation}\label{eq:costevol}
    \Lagr_k = c + C_k, \quad C_k \sim \gauss{0}{\sigma^2_{C_k}},
\end{equation}
where $c := \normdiff{x_{k+1}}{x_k}$ is the Euclidean norm of the difference between configurations in $\xfree$ corresponding to consecutive nodes \revadd{in $\V_{\mc{T}}$}, and $C_k$ follows a \revadd{standard} normal distribution with variance $\sigma^2_{C_k}$. \revadd{Thus, we can define $\len{p[k]}$ as $\normdiff{\big[p[k]\big]_2}{\big[p[k]\big]_1} + C_k$, with $p[k] \equiv (x_{k}, x_{k+1})$.}

It is straightforward to verify that the distribution of $\Lagr_k$ is Gaussian, with mean, $c$, and variance, $\sigma^2_{C_k}$. We can thus write $\Lagr_k$'s probability density function and $\mathbb{P}^{\theta}_k$ respectively as
\begin{subequations}
\begin{align}\label{eq:pdfa}
f_{\Lagr_k}(\Lagr_k)&=\frac{1}{\sqrt{2 \pi \sigma^2_{C_k}}} \exp \left(-\frac{(\Lagr_k - c)^2}{2 \sigma^2_{C_k}}\right) \ \text{\revadd{and}}\\ \label{eq:pdfb}
\mathbb{P}^{\theta}_k &= \gauss{c}{\theta}, \quad \text{ with } \theta = \sigma^2_{C_k}.
\end{align}
\end{subequations}
\textbf{Risk Notion}: To assess the robot's risk at each stage of \cref{prb:ssp}, we adopt the $\alpha$-level CVaR ($\cvar$) of $\Lagr_k$, which quantifies the expected worst-case realization of $\Lagr_k$\revadd{, with $0 < \alpha \le 1$}. The $\cvar$ risk assessment tool possesses certain useful mathematical properties (see \cite{krokhmal_higher_2007} and attendant references), including convexity and monotonicity, which make it amenable to dynamic programming problems (such as the SSP problem) with optimal substructure. Intuitively, and in the path planning context, $\cvar(\Lagr_k)$ is the average of all realizations of $\Lagr_k$ that exceed the $\alpha$ quantile, i.e., the $\alpha$-level Value-at-Risk ($\var$), \revadd{which characterizes some threshold on $\Lagr_k$}. Furthermore, the CVaR provides probabilistic guarantees on the best (optimal) path length that can be attained under the prescribed maximum noise variance. Moreover, as set forth in the risk optimization literature \cite{james_risk-sensitive_1994} and, more recently, in results from risk-sensitive reinforcement learning \cite{noorani_risk-sensitive_2023}, by optimizing with respect to a risk-constrained objective function, one can establish a probabilistic guarantee on the upper bound of the (cumulative) cost tail probability (for bounded stage costs). These results motivate and lend credence to our risk-sensitive SSP planning approach.

\subsection[Computing the CVaR of the Segment-Wise Path Length]{Computing $\cvar(\Lagr_k)$}\label{ssec:compcvar}
We will now segue to deriving an expression for $\cvar(\Lagr_k)$ given the foregoing background. Formally, the $\cvar$ of a random variable $Y$ represents the expected value of the distribution of $Y$ conditioned on the event that $Y$ is above the $\var$ threshold. Mathematically, it is defined as
\begin{subequations}\label{eq:cvarform}  
    \begin{align}
    \label{eq:cvarforma} 
       \cvar(Y) &= \expt[Y \mid Y \geq \var(Y)]\\
                &=\inf_{z}\left\{z+\frac{1}{1-\alpha} E[Y-z]^{+}\right\}\\
                &=\frac{1}{1-\alpha} \int_\alpha^1 z dy,
    \end{align}
\end{subequations}
where $[\star]^+ := \max(\star, 0)$, and $z=\var(Y)$ \revadd{is the $\alpha$-quantile of $Y$'s distribution}, given by 
\begin{equation}\label{eq:varformalt}
    \var(Y) = \inf_{z} \{z \in \mathbb{R}\mid P(Y \leq z)\ge\alpha\}.
\end{equation}
In our current setting where the random variable of interest ($\Lagr_k$) is normally distributed, the $\var$ is given as
\begin{equation}\label{eq:varform}
    \var(\Lagr_k)=F_{\Lagr_k}^{-1}(\alpha).
\end{equation}
In \cref{eq:varform}, $F_{\Lagr_k}^{-1}(\alpha)$ is the inverse cumulative distribution function (CDF) of $\Lagr_k$ evaluated at $\alpha$, which we can write (from \cref{eq:pdfa}) as
\begin{equation}\label{eq:cdfform}
                 F_{\Lagr_k}^{-1}(\alpha)=c+\varsigma(\alpha) \sigma_{C_k},
\end{equation}
with $\varsigma(\alpha)=\sqrt{2}\operatorname{erf}^{-1}(2\alpha-1)$, and where $\operatorname{erf}(z)=(2 / \sqrt{\pi}) \int_0^z e^{-\tau^2} d\tau$ is the standard error function.
We can thus write $\cvar(\Lagr_k)$ from \cref{eq:cvarform,eq:cdfform} as:
\begin{equation}\label{eq:cvarformck} 
        \cvar(\Lagr_k)
        =\frac{1}{1-\alpha} \int_\alpha^1 \left(c+\varsigma(\alpha) \sigma_{C_k}\right) d\Lagr_k.
\end{equation}

\section{Robust SSP Planning via Risk-Sensitive RRT{*}}\label{sec:ra-rrt-star}
\subsection{SSP Planning with CVaR \texorpdfstring{\revadd{Criteria}}{Criteria}}\label{ssec:sppcvar}
Having discussed the calculation of the CVaR, we will now state a modified version of \cref{prb:ssp}, with the objective defined in terms of the CVaR. We first define the worst-case path length, which we will invoke in the modified problem.
\begin{definition}[Worst-Case Path Length]\label{def:wcpathlen}
We define the worst-case path length (denoted as $\lworst$) as the expected sum of the lengths of all path segments in $p$ under the maximum noise variance. Formally, we can define $\lworst$ as $\sup_{\sigma_{C_k}}\expt\left[\Lagr \right]$.
\end{definition}
To re-express $\lworst$ in terms of the CVaR, we \revadd{apply} the following dual representation of the CVaR (see \cite{ang2018dual})
\begin{equation}\label{eq:eqrepcvar}
\cvar(Y)=\sup_{\mathbb{P}_Y\in\Pi_Y}\expt[Y],
\end{equation}
where the supremum is taken over the set, $\Pi_Y$, of all admissible probability distributions, $\mathbb{P}_Y$ of $Y$.
With \cref{eq:eqrepcvar}, we can thus rewrite $\lworst$ as
\begin{equation}\label{eq:cvarlworsteqv}%
\lworst = \sup_{{\mathbb{P}_{\Lagr}}\in\Pi_\Lagr} \expt[\Lagr] = \cvar(\Lagr),
\end{equation}
where ${\mathbb{P}_{\Lagr}}$ is the probability distribution of the resulting path \revadd{length}, and $\Pi_\Lagr$ denotes the family of all such distributions\revadd{, i.e., each member of $\Pi_\Lagr$ characterizes an instance of ${\mathbb{P}_{\Lagr}}$ defined by a unique noise variance}. Thus, from the linearity of (conditional) expectation, under the assumption that the segment lengths are independent, and by the one-step coherence of the CVaR \cite{ahmadi_risk-averse_2021}, we can write
\begin{equation}\label{eq:cvarlworst}%
\lworst =\cvar(\Lagr)=\sum_{k=0}^{N-1} \cvar\left(\Lagr_k\right).%
\end{equation}
We can now state the modified SSP problem.
\begin{problem}[Modified SSP Problem with CVaR Criterion]\label{prb:sspcvar}\noindent Assume the setting in \cref{def:path,def:xfreerep,def:wcpathlen}. Find a path, $p$, that minimizes $\lworst$ such that $x_0=\xinit$ and $x_N=\xgoal$.
\end{problem}
Accordingly, the solution to \cref{prb:sspcvar} is the path $p$ that solves the following mathematical program%
\begin{subequations}\label{eq:optprobcvarobj}
\begin{align}\label{seq:optprobcvarobj}
&\hspace{0\columnwidth} \min_{p\in\mathcal{P}}\Jcvar(\xinit, \alpha) := \sum_{k=0}^{N-1} \cvar\left(\Lagr_k \right) \\
\text {s.t.: }&x_0 = \xinit,  x_{N} = \xgoal, N \le \nmax,\\
& x_k \in \xfree, \revadd{\Lagr_k = \len{p[k]}}, \ \forall \ k \in \setz{0}{\revadd{\nmax-1}}.
\end{align}
\end{subequations}

\subsection[Probabilistic Guarantee on the Optimal Worst-Case Path Length]{Probabilistic Guarantee on the Optimal \revadd{$\lworst$ Value}}\label{ssec:probub}
In \cref{prop:probub}, we derive a probabilistic bound on the optimal worst-case path length, motivated by the proof of Theorem 2 in \cite{noorani_risk-sensitive_2023}. We begin by deriving an expression for the Kullback–Leibler (KL) divergence in our present context via the following lemma. \revadd{As we will show, the KL divergence allows us to reason about the expected optimal worst-case path length through appropriate concentration inequalities\footnote{\revadd{As an alternative to the KL divergence, one can apply tools such as the total variation distance and the Wasserstein metric, with appropriate changes to the proofs.}}.}
\begin{lemma}\label{lem:kldiv}
 Let $\lstarworst$ denote the optimal \revadd{value of $\lworst$}, i.e., the \revadd{length of the path, $\operatorname{argmin}_{p\in\mathcal{P}} \Jcvar(\xinit, \alpha))$, returned by the mathematical program in \cref{prb:sspcvar}}, and suppose that ${\mathbb{P}_{\lstarworst}}$ is its associated \revadd{probability} distribution, \revadd{with mean, ${\muellstar}$, and variance, ${\sigmaellstarsq}$}. \revadd{Let $\lmax$ denote some designer-prescribed path length threshold}, with associated distribution, \revadd{${\mathbb{P}_{\lmax}} := \gauss{\lmax}{\sigma_\delta}$, where $\delta > 0$ is a tolerance on $\lmax$}\footnote{\revadd{For a $99\%$ confidence interval, one can define $\sigma_\delta \approx \nicefrac{\delta}{2.58}$, so that $P(\lmax-\delta \leq \Lagr \leq \lmax+\delta) \approx 0.99.$}}.%
  ~Denote the KL divergence of ${\mathbb{P}_{\lstarworst}}$ from ${\mathbb{P}_{\lmax}}$ as $\revadd{\mathbb{D}_{\mathrm{KL}}\left({\mathbb{P}_{\lstarworst}} \ \| \ {\mathbb{P}_{\lmax}}\right)}$,\revadd{ and suppose there exists a sufficiently small $\epsilon > 0$} such that $\revadd{\mathbb{D}_{\mathrm{KL}}\left({\mathbb{P}_{\lstarworst}} \ \| \ {\mathbb{P}_{\lmax}}\right)} \le \varepsilon$. Then the following inequality holds:
\begin{equation}
    \log \frac{\sigmaellstar}{\sigma_{\revadd{\delta}}\sqrt{N}}+\frac{N\sigma^2_{\revadd{\delta}}+\left(\sum_{k=0}^{N-1}c-\muellstar\right)^2}{2 \sigmaellstarsq}-\frac{1}{2} \le \varepsilon.
\end{equation}
\end{lemma}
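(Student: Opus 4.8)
The plan is to recognize that both distributions in play are univariate Gaussians, so the claimed inequality is nothing more than the closed-form Kullback--Leibler divergence between two normal laws, specialized to the present parameters, combined with the standing hypothesis that this divergence is at most $\varepsilon$. Concretely, I would recall the standard identity, for $\mathcal{N}(\mu_1,\sigma_1^2)$ and $\mathcal{N}(\mu_2,\sigma_2^2)$,
\begin{equation}
\mathbb{D}_{\mathrm{KL}}\!\left(\mathcal{N}(\mu_1,\sigma_1^2)\ \|\ \mathcal{N}(\mu_2,\sigma_2^2)\right)=\log\frac{\sigma_2}{\sigma_1}+\frac{\sigma_1^2+(\mu_1-\mu_2)^2}{2\sigma_2^2}-\frac{1}{2},
\end{equation}
so that the entire proof reduces to identifying the correct means and variances and substituting them into this expression.

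First I would pin down the two distributions. Since each segment length $\Lagr_k=c+C_k$ is Gaussian by \cref{eq:costevol} and the segments are independent, the optimal worst-case path length is a finite sum of independent Gaussians and hence itself Gaussian; thus $\mathbb{P}_{\lstarworst}=\gauss{\muellstar}{\sigmaellstarsq}$, with the mean and variance as named in the statement. Next I would treat the reference law $\mathbb{P}_{\lmax}$ at the level of the full path rather than a single segment: viewing the threshold as the aggregate of $N$ independent segment-wise tolerances, each of variance $\sigma_\delta^2$ and with nominal means summing to $\sum_{k=0}^{N-1}c$, the total reference distribution is $\gauss{\sum_{k=0}^{N-1}c}{N\sigma_\delta^2}$. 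This aggregation step is precisely what produces the factor $\sigma_\delta\sqrt{N}$ inside the logarithm and the $N\sigma_\delta^2$ term in the numerator.

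With these identifications in hand, I would substitute $\sigma_1=\sigma_\delta\sqrt{N}$, $\mu_1=\sum_{k=0}^{N-1}c$, $\sigma_2=\sigmaellstar$, and $\mu_2=\muellstar$ into the Gaussian KL identity, which reproduces the left-hand side of the claimed inequality verbatim. Invoking the hypothesis $\mathbb{D}_{\mathrm{KL}}\!\left(\mathbb{P}_{\lstarworst}\ \|\ \mathbb{P}_{\lmax}\right)\le\varepsilon$ then closes the argument, since the computed left-hand side is exactly this divergence.

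I expect the only genuine subtlety --- as opposed to the routine algebra of expanding the Gaussian KL --- to be bookkeeping the direction of the divergence and the $N$-dependence together: one must place the reference (threshold) law in the role whose variance is $N\sigma_\delta^2$ and standard deviation $\sigma_\delta\sqrt{N}$, and the optimal-path law in the role supplying $\sigmaellstarsq$ in the denominator, so that the logarithmic term emerges as $\log\!\left(\sigmaellstar/(\sigma_\delta\sqrt{N})\right)$ rather than its reciprocal. A secondary point worth making explicit is the Gaussianity of $\mathbb{P}_{\lstarworst}$ itself, which rests on the closure of the Gaussian family under independent summation together with the independence of the segment lengths assumed preceding \cref{eq:cvarlworst}.
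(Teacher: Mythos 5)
Your proposal follows the same skeleton as the paper's proof --- reduce everything to the closed-form KL divergence between two univariate Gaussians, substitute the four parameters, and invoke the hypothesis $\mathbb{D}_{\mathrm{KL}}\left({\mathbb{P}_{\lstarworst}} \ \| \ {\mathbb{P}_{\lmax}}\right) \le \varepsilon$ --- and it lands on the identical algebraic expression. Where you genuinely differ is in how the Gaussian $\gauss{\sum_{k=0}^{N-1}c}{N\sigma_\delta^2}$ enters. The paper invokes the central limit theorem to claim that $\mathbb{P}_{\lstarworst}$ itself converges to $\gauss{\muell}{\sigmaellsq}$ with $\muell = \sum_{k=0}^{N-1}c$ and $\sigmaell = \sigma_\delta\sqrt{N}$, and then substitutes; you instead keep $\mathbb{P}_{\lstarworst} = \gauss{\muellstar}{\sigmaellstarsq}$ exact (closure of Gaussians under independent summation, which is cleaner than a CLT appeal --- a limit argument is redundant when every summand is already Gaussian) and obtain $\gauss{\sum_{k=0}^{N-1}c}{N\sigma_\delta^2}$ by aggregating the threshold law over the $N$ segments. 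Your reading also avoids the paper's oddity of making the variance of $\mathbb{P}_{\lstarworst}$ depend on the tolerance $\sigma_\delta$ rather than on the segment noise $\sigma_{C_k}$.

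The catch --- and it is worth stating plainly --- is that the two readings place the two laws in opposite slots of the (asymmetric) KL formula, and your final step does not survive this. Under your identifications, the lemma's left-hand side is exactly $\mathbb{D}_{\mathrm{KL}}\left(\gauss{\sum_{k=0}^{N-1}c}{N\sigma_\delta^2} \ \| \ {\mathbb{P}_{\lstarworst}}\right)$, i.e., the \emph{reverse} divergence $\mathbb{D}_{\mathrm{KL}}\left({\mathbb{P}_{\lmax}} \ \| \ {\mathbb{P}_{\lstarworst}}\right)$, whereas the hypothesis bounds $\mathbb{D}_{\mathrm{KL}}\left({\mathbb{P}_{\lstarworst}} \ \| \ {\mathbb{P}_{\lmax}}\right)$; since KL is not symmetric, your claim that ``the computed left-hand side is exactly this divergence'' is not valid under your own slot assignment. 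To be fair, the paper's proof conceals the same defect in mirror image: it keeps $\mathbb{P}_{\lstarworst}$ nominally in the first slot (via the CLT identification) but then fills the second slot with $(\muellstar, \sigmaellstarsq)$, the very parameters its own statement assigns to $\mathbb{P}_{\lstarworst}$ rather than to $\mathbb{P}_{\lmax} := \gauss{\lmax}{\sigma_\delta}$. So your attempt is no less rigorous than the published argument --- the inconsistency is inherited from the lemma statement itself --- but the direction bookkeeping you flag as a subtlety is not actually resolved by either proof: a clean repair would either restate the hypothesis as a bound on the reverse divergence (matching your assignment) or redo the substitution in the stated order, which puts $2N\sigma_\delta^2$ in the denominator, flips the logarithm, and yields a different display than the one claimed.
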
%
\begin{proof}
By the central limit theorem for sums, we know that in the limit as $N$ increases, \revadd{$\mathbb{P}_{\lstarworst}$} \revadd{will converge} in probability to a normal distribution given by
${\mathbb{P}_{\Lagr}} = \gauss{\muell}{\sigmaellsq}$, where $\muell := \sum_{k=0}^{N-1}c$ and $\sigmaell := \sigma_{\revadd{\delta}}\sqrt{N}$. \revadd{By} the definition of the KL divergence between two univariate Gaussians, we can write $\revadd{\mathbb{D}_{\mathrm{KL}}\left({\mathbb{P}_{\lstarworst}} \ \| \ {\mathbb{P}_{\lmax}}\right)}$ as\begin{align}&\log \frac{\sigmaellstar}{\sigmaell}+\frac{\sigmaellsq+\left(\muell-\muellstar\right)^2}{2 \sigmaellstarsq}-\frac{1}{2},\\
&\Rightarrow \log \frac{\sigmaellstar}{\sigma_{\revadd{\delta}}\sqrt{N}}+\frac{N\sigma^2_{\revadd{\delta}}+\left(\sum_{k=0}^{N-1}c-\muellstar\right)^2}{2 \sigmaellstarsq}-\frac{1}{2}\le \varepsilon.\nonumber\end{align}\end{proof}%
\begin{proposition}[Probabilistic Guarantee on \revadd{$\lstarworst$}]\label{prop:probub}Suppose the premises of \cref{lem:kldiv} hold. Then the \revadd{probability of $\lstarworst$ exceeding $\lmax$ is bounded above by the quantity specified in the} following inequality:
\begin{subequations}
\begin{align}
&P_{\revadd{\lstarworst} \sim {\revadd{\mathbb{P}_{\lstarworst}}}}[\lstarworst \geq \lmax] \nonumber\\
&\leq  \frac{1}{\lmax}\revadd{\displaystyle\sum_{k=0}^{|p|-1}{\len{\Jcvar^\star[k]}}}+\frac{1}{\alpha }\frac{\varepsilon}{\lmax}, \quad 0 <\alpha\le1 \nonumber,
\end{align}
\end{subequations}
where \revadd{$\Jcvar^\star:=\arg \min_{p\in\mathcal{P}}\Jcvar(\xinit, \alpha)$}.
\end{proposition}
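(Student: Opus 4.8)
The plan is to combine a first-order tail estimate (Markov's inequality) with a change-of-measure argument that converts the KL bound established in \cref{lem:kldiv} into the additive, $\varepsilon$-dependent correction appearing in the proposition. The leading $\nicefrac{1}{\lmax}$ prefactor should emerge from Markov, and the two summands should correspond to a \emph{nominal} optimal-path-length contribution and a \emph{risk} contribution governed by the divergence.

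First, since path lengths are non-negative by \cref{def:path}, I would apply Markov's inequality to $\lstarworst$ under its own law ${\mathbb{P}_{\lstarworst}}$, obtaining
\[
P_{\lstarworst \sim {\mathbb{P}_{\lstarworst}}}[\lstarworst \ge \lmax] \le \frac{\expt_{{\mathbb{P}_{\lstarworst}}}[\lstarworst]}{\lmax} = \frac{\muellstar}{\lmax}.
\]
This isolates the $\nicefrac{1}{\lmax}$ factor and reduces the problem to showing that the mean $\muellstar$ of the optimal worst-case length admits the decomposition into $\sum_{k=0}^{|p|-1}\len{\Jcvar^\star[k]}$ (the nominal optimal-path length) plus a correction no larger than $\nicefrac{\varepsilon}{\alpha}$.

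Second, I would exploit the structure of the worst-case length itself: by \cref{eq:cvarlworsteqv} and the one-step coherent decomposition in \cref{eq:cvarlworst}, $\expt_{{\mathbb{P}_{\lstarworst}}}[\lstarworst]$ equals the CVaR of the optimal path's length, which expands segment-wise. Invoking the dual (robust) representation of the CVaR in \cref{eq:eqrepcvar}, under which the supremum is effectively taken over measures whose density ratio against the nominal law is capped at $\nicefrac{1}{\alpha}$, a change of measure from ${\mathbb{P}_{\lstarworst}}$ to the threshold law ${\mathbb{P}_{\lmax}}$ should yield
\[
\expt_{{\mathbb{P}_{\lstarworst}}}[\lstarworst] \le \sum_{k=0}^{|p|-1}\len{\Jcvar^\star[k]} + \frac{1}{\alpha}\,\mathbb{D}_{\mathrm{KL}}\!\left({\mathbb{P}_{\lstarworst}} \ \| \ {\mathbb{P}_{\lmax}}\right).
\]
The $\nicefrac{1}{\alpha}$ prefactor is precisely the density-ratio cap from the CVaR duality pairing. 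I would then apply \cref{lem:kldiv} to replace the divergence by $\varepsilon$, substitute back into the Markov estimate, and divide by $\lmax$ to recover the claimed two-term bound for every $0 < \alpha \le 1$.

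I expect the main obstacle to be the second step: rigorously justifying the additive split of $\muellstar$ through the change of measure and pinning down the exact $\nicefrac{1}{\alpha}$ constant. Both the coherence-based decomposition of \cref{eq:cvarlworst} and the duality between the CVaR level $\alpha$ and the admissible density ratios must be handled with care, since a naive Pinsker- or Donsker--Varadhan-type estimate would produce a $\sqrt{\varepsilon}$ term or an entropic penalty rather than the linear $\nicefrac{\varepsilon}{\alpha}$ dependence asserted in the proposition. Reconciling \cref{lem:kldiv}'s quadratic $(\muell-\muellstar)^2$ control with the linear-in-$\varepsilon$ target is therefore the delicate point that the duality argument, rather than a direct concentration inequality, is meant to resolve.
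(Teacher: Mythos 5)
Your proposal is correct and takes essentially the same route as the paper's proof: Markov's inequality produces the $\nicefrac{1}{\lmax}$ prefactor, the mean $\mathbb{E}[\lstarworst]$ is then bounded by $\sum_{k=0}^{|p|-1}\len{\Jcvar^\star[k]}+\frac{1}{\alpha}\mathbb{D}_{\mathrm{KL}}\left({\mathbb{P}_{\lstarworst}} \ \| \ {\mathbb{P}_{\lmax}}\right)$, and \cref{lem:kldiv} replaces the divergence by $\varepsilon$. The only distinction is that the paper states this intermediate change-of-measure inequality without derivation (motivating it by Theorem 2 of the cited risk-sensitive RL work), whereas you sketch a CVaR-duality justification for it---an elaboration of the same step, not a different argument.
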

\begin{proof}
Invoking Markov's inequality, we can write
\begin{align}\label{eq:bound1}
&P_{\revadd{\lstarworst \sim {\mathbb{P}_{\lstarworst}}}}[\lstarworst \geq \lmax] \leq \frac{\mathbb{E}[\lstarworst]}{\lmax}\\
& \leq \frac{1}{\lmax}\left(\revadd{\displaystyle\sum_{k=0}^{|p|-1}{\len{\Jcvar^\star[k]}}}+\revadd{\frac{1}{\alpha} \mathbb{D}_{\mathrm{KL}}\left({\mathbb{P}_{\lstarworst}} \ \| \ {\mathbb{P}_{\lmax}}\right)}\right)\nonumber.
\end{align}
From \cref{eq:bound1} and by \cref{lem:kldiv}, we can thus write:
\begin{align}\label{eq:bound2}
&P_{\revadd{\lstarworst \sim {\mathbb{P}_{\lstarworst}}}}[\lstarworst \geq \lmax] \nonumber\\
&\quad \leq \frac{1}{\lmax}{\revadd{\displaystyle\sum_{k=0}^{|p|-1}{\len{\Jcvar^\star[k]}}}}+ \frac{1}{\lmax}\frac{1}{\alpha}\varepsilon, \text{ \revadd{so that} }
\end{align}%
\revadd{%
\begin{equation}\label{eq:ubprobbound}\mathbb{E}[\revadd{\lstarworst}] \leq \revadd{\displaystyle\sum_{k=0}^{|p|-1}{\len{\Jcvar^\star[k]}}}+\frac{1}{\alpha}{\varepsilon}.\end{equation}
}\end{proof}
\subsection{The RA-RRT* Planning Algorithm}\label{ssec:rarrtstar}
In this section, we will discuss the components of our RA-RRT* algorithm for solving \cref{prb:sspcvar}. \cref{fig:grid-world-ex} highlights relevant running components that will be referenced in this section. In what follows, we will assume the tree-based representation of $\xfree$ set forth in \cref{def:xfreerep}. 
\renewcommand*\footnoterule{}
\begin{savenotes}
\begin{algorithm}[ht]
    \caption{RA-RRT*: Tree Growth}\label{alg:rarrt}
    \begin{algorithmic}[1]
        \State \Input Empty tree: $\mathcal{T}$, with start position, $\xinit$ as root; Goal position: $\xgoal$.
        \State \Parameters Maximum number of iterations: $\maxiter$, Neighborhood radius: $\rneighbors$; Maximum rewiring radius: $\maxrewirerad$; Robot radius: $\rrad$; Variance schedule of path-segment-wise additive noise: $\sigma^2_{C_k}, \ \forall \ k \in \setz{0}{\maxiter}$; Number of samples for $\kth$ segment random cost distribution: $\numcostdistsamp$; Confidence level: $\alpha$; Number of neighbors: $k_n$.
        \State \Output Path, $p$.
        \State $x_0 \gets \xinit$, $p \gets [\xinit]$.
        \While{$k \le \maxiter-1$}
            \State $\xrand \gets$ {\scshape{Sample}}($\xfree$).
            \State $\xnearest \gets$ \scs{Nearest}($\mc{T},\xrand$).
            \State $\xnew \gets$ {\scshape{Steer}}($\xnearest, \xrand, \mc{T}$).
            \LComment{If candidate segment in $p$ is obstacle-free:}
            \If{\scs{\revadd{Line}}($\xnew$, $\xnearest$) $\notin \Xobs$}
                \State $N(\xnew) \gets ${\scshape{GetNeighbors}}$(\xnew, \rneighbors, k_n)$.\label{algline:getneig}
                \State $V_{\mc{T}} \gets \{\xnew\}$
                \ForAll{$\xnear \in N(\xnew)$}
                    \State Compute $c(\xnearest, \xnear)$ and construct distribution \big($\gauss{c}{\sigma^2_{C_k}}$\big) of $\numcostdistsamp \ \Lagr_k$ samples.%
                    \State Compute $\var(\Lagr_k)$ \Comment{\small Equation \cref{eq:varform}}.\label{algline:var}
                    \State Compute $\cvar(\Lagr_k)$ \Comment{\small Equation \cref{eq:cvarformck}}.\label{algline:cvar}
                \EndFor
                \State Set $\xmincvar := \arg\inf_{\xnear}\cvar(\Lagr_k)$.
                \LComment{See \cref{alg:checkcollision} for *\tee{\small args}.}
                \If{{\scshape{\revadd{CollisionFree}}}($\xmincvar, \xnew, *\tee{\small args}$)}
                    \State $p \gets \{p, \revadd{(\xmincvar, \xnew)}\}$.
                    \State $\mc{T} \gets$ {\scshape{Rewire}}($\xmincvar, \maxrewirerad, \mc{T}$).
                \Else 
                   \State $\Xobs \gets \Xobs \bigcup \revadd{\{\xmincvar\}}$.
                   \State {Return to} Line 5.
                \EndIf
                \State $\xmincvar \leftarrow x_k$.
                \State $k\leftarrow k+1$.
                \If{$k = \maxiter- 1$ \textbf{and} $x_k \neq \xgoal$}
                    \State \Return {{\scshape{Failure}}}.
                \EndIf
            \EndIf
        \EndWhile
        \Return $p$.
    \end{algorithmic}
\end{algorithm}
\end{savenotes}
\begin{figure}[htb]
    \def\figW{0.32}
    \def\scaleW{.7}
    \centering
    \scalebox{\scaleW}{
    \begin{tikzpicture}[spy using outlines={circle,black,magnification=2.5,size=3.2cm, connect spies,every spy on node/.append style={ultra thick}}]
    \node {\includegraphics[interpolate,width=\columnwidth]{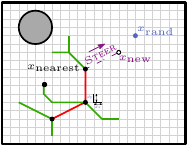}};
    \spy[every spy on node/.append style={ultra thick}] on (-0.15,-1.42) in node [label={[shift={(0.15,-5.3)}]{\Large\color{arxblue}$\bm{2.5\times}$}},right] at (4.5,0);
    \end{tikzpicture}
    }
    \caption{\textbf{Planning environment}: An illustration of the \revadd{grid-world} environment capturing important objects used throughout this article. Tree edges are depicted in green color and path segments in red color, while the gray-filled circle denotes an obstacle. A magnified inset highlights the grid's resolution along the abscissa and ordinate axes.}\label{fig:grid-world-ex}
\end{figure}

\cref{alg:rarrt} contains pseudocode for implementing our RA-RRT* algorithm that proceeds as follows. First, a random node (denoted as $\xrand$) is sampled uniformly from $\xfree$ via the \scs{Sample} primitive routine\footnote{In practice, $\xrand$ is sampled from a prescribed area bounding the most current node in $\V_{\mc{T}}$.}. Next, the nearest node ($\xnearest$) in $\mc{T}$ to $\xrand$ is computed via the \scs{Nearest} subroutine by finding the element in $\V_\mc{T}$ satisfying the relation $\inf_{x\in \V_{\mc{T}}}\normdiff{x}{\xrand}$. 
Having computed $\xnearest$, the {{\scshape{Steer}}} primitive routine returns a node ($\xnew$) in $\xfree$ that is closer to $\xrand$ than $\xnearest$, i.e., it attempts to drive $\xnearest$ towards $\xrand$ \cite{karaman2011sampling}. 
The {\scshape{GetNeighbors}} sub-procedure computes and returns a set of $k_n$ \textit{nearest} nodes \revadd{(to $\xnew$)} within the ball of radius, $\rneighbors$, i.e., the set, $N(\xnew):= \{x^{\prime} \in \xfree \ | \ \normdiff{\xnew}{x^{\prime}} \le \rneighbors\}$. This step is then followed by a computation of the $\var$ and $\cvar$ of the \revadd{length} associated with the path segment formed by $\xnear$ and $\xnew$ for each node $\xnear$ in $N(\xnew)$. The minimum-CVaR node, $\xmincvar$, over all nodes in $N(\xnew)$ is subsequently determined.
\begin{algorithm}[t]
    \caption{{\scshape{\revadd{CollisionFree}}}($\xmincvar, \xnew, \Xobs, \rrad, \mc{T}$)}\label{alg:checkcollision}
    \begin{algorithmic}[1]
        \State \Inputs $\xmincvar, \Xobs, \rrad, \mc{T}$.
        \State \Output Boolean: \tee{True}, \tee{False}.
        \For{$\mc{X}_{i,o}, i = 1, 2, \ldots, |\Xobs|$}
            \If{\scs{NonConvex}($\mc{X}_{i,o}$)} \Comment{Boolean-valued sub-routine that returns \tee{True} if $\mc{X}_{i,o}$ is non-convex.}
                \State $\Xobs \setminus \mc{X}_{i,o}$.
                \State $\Tilde{\mc{X}}_{i,o} \gets$ {\small\scs{PolyhedralApproximation}($\mc{X}_{i,o},\numobsappx$)}.\label{algline:pappx}
            \EndIf
            \State $\Xobs \gets \Xobs \bigcup_{\forall i}\Tilde{\mc{X}}_{i,o}$.
        \EndFor
        \If{\scs{\revadd{Line}}$(\xmincvar, \xnew)\notin\Xobs$}\Comment{If edge is collision-free}
           \State {\Return \tee{True}}.
        \EndIf
        \State {\Return \tee{False}}.
    \end{algorithmic}
\end{algorithm}
Upon discovery of $\xmincvar$, we invoke the {\scshape{\revadd{CollisionFree}}} primitive routine (\cref{alg:checkcollision}) to check if the path segment between $\xmincvar$ and $\xnearest$ lies entirely in $\xfree$. This subroutine assumes that the obstacles \revadd{($\mc{X}_{i,o}$) in $\mc{X}_{o}$} are (or can be made) convex (see line~\ref{algline:pappx}) by invoking the \scs{PolyhedralApproximation} procedure\footnote{The \scs{PolyhedralApproximation} sub-procedure --- adapted from \cite{marcucciShortestPathsGraphs2024a} --- computes and returns a polyhedral approximation \revadd{($\Tilde{\mc{X}}_{i,o}$)} of a non-convex obstacle \revadd{($\mc{X}_{i,o}$) given $\mc{X}_{i,o}$'s desired number of vertices, $\numobsappx$}.}, so that given the robot's radius, $\rrad$, the obstacles can be dilated using a dilation technique such as the Minkowski sum. 

\revadd{If no collision is detected}, the {\scshape{Rewire}} subroutine checks nodes within a ball of the rewiring radius, $\rewirerad$, to determine if a better (shorter) path to $\xinit$ can be traced from the newly-added node ($\xnew$) and updates the tree accordingly if a node is found. At runtime, $\rewirerad$ is determined as $\min \left\{\gamma(\nicefrac{\log {|\V_\mc{T}|}}{|\V_{\mc{T}}|})^{1 / 2}, \maxrewirerad\right\}$, where $\maxrewirerad$ is the maximum rewiring radius, and $\gamma$ is a scalar that is calculated using an expression provided in \cite{karaman2011sampling} (Section 3.3.1). We refer the reader to \cite{karaman2011sampling} for a thorough discussion on the foregoing procedures.

\subsection{Algorithmic Complexity of the RA-RRT* Algorithm}\label{ssec:algcmplx}
In this section, we present the computational complexity of our RA-RRT* algorithm. As established in the path planning literature (\cite{lavalle_planning_2006,karaman2011sampling}), it is well known that the RRT* algorithm is linear in query time ($\mathcal{O}(n)$) and log-linear in processing time ($\mathcal{O}(n\log{n})$)), while its (memory) space complexity is linear ($\mathcal{O}(n)$), where $n$ is the sample size. Our algorithm differs from the RRT* algorithm in two ways. First, in the generation of the set of nearby nodes (line~ \revadd{\ref{algline:getneig}}), connections to $k_n$ nearest neighbors are sought. The other difference is in the RA-RRT* algorithm's node selection process that introduces $\var$ and $\cvar$ computation steps (lines~\revadd{\ref{algline:var}}~and~\revadd{\ref{algline:cvar}} of \cref{alg:rarrt}) prior to the \scs{\revadd{CollisionFree}} sub-procedure. With this background and assuming the sorting algorithm for each sorting-dependent quantity (e.g., $\var, \cvar$, $\min$, etc.) is the \textit{merge sort} algorithm, we can affirm the following:
\begin{enumerate}[leftmargin=*,label=\roman*.]
    \setlength{\itemsep}{-5pt}
    \item The RA-RRT* algorithm is \textit{linear} in query time, since it is a single-query algorithm like the RRT*.  
    \item The RA-RRT* algorithm has a space complexity of $\mc{O}(n)$, i.e., it is \textit{linear-space}, since only the path is saved in memory for single-query access.
    \item From lines~\revadd{\ref{algline:var}}~and~\revadd{\ref{algline:cvar}} of \cref{alg:rarrt}, with $\numcostdistsamp$ corresponding to the number of samples of $C_k$ drawn at the $\kth$ sampling iteration and for $k_n$ neighbors ($\forall \ k$), the processing time complexity of the RA-RRT* algorithm can be calculated as $\mc{O}(n\log{(n\cdot\nrarrt)})$, where $\nrarrt$ is the number determined by the following expression
    \begin{equation}\label{eq:nrarrt}
        \max\left(\ncvark, \numcostdistsamp\right).
    \end{equation}
    Thus, the RA-RRT* algorithm is also \textit{log-linear} in processing time.
\end{enumerate}
    In \cref{eq:nrarrt}, $\ncvark$ is the cardinality of the following set 
    \begin{subequations}
    \begin{align*}
    \{\xnear \in N(\xnew) \ | \ \normdiff{\xnear}{\xnew} + C_k \ge \var(\Lagr_k)\}.
    \end{align*}
    \end{subequations}
    Thus, by comparing the processing time complexity for both algorithms, we expect an increase in the computation time (see \cref{tab:allresults}, Column 3), since for a positive base (base-2 in this case) and for $n$ and $\nrarrt$ both greater than 1,
    \begin{subequations}
    \begin{align}
           \mc{O}(n\log{(n\cdot\nrarrt)}) &= \mc{O}(n[\log {n} + \log{\nrarrt}]) > \mc{O}(n\log{n}).\nonumber
    \end{align}
    \end{subequations}
    We note here that $n$ in the foregoing algorithmic analysis corresponds to $\maxiter$ used in the main algorithm; we have simply favored the more concise notation here for brevity.

\section{Numerical Simulations}\label{sec:exp}
\subsection{Planning Environment}
We represent the planning environment by a (planar) grid of finite area and sufficient discretization, where each grid point corresponds to a unique \revadd{(robot)} configuration in $\mc{X}$. Configurations belonging to C-space obstacles within the grid are known and assumed to be dilated by the prescribed robot radius, $\rrad$, so that a point-robot assumption makes sense geometrically. To each new node in the tree, we associate a $k_n$-connected neighborhood, that is, from each grid point, the robot can attempt to steer towards the uniformly-sampled node from one of $k_n$ nearby configurations. Transitions in any given direction are permissible as long as the Euclidean distance to be traversed is under the prescribed threshold, $\rewirerad\sqrt(\dgridx^2 + \dgridy^2)$\revadd{, where $\rewirerad$ is the rewiring radius (see the discussion immediately preceding \cref{ssec:algcmplx})}. It is straightforward to verify that such motion constraints can be represented by the single-integrator model
\begin{equation}\label{eq:singintdk}
        x_{k+1} = x_k + u_k,
\end{equation}
where $u_k$ is in the set
\[\setlength\arraycolsep{2pt}
    \left\{
        \begin{bsmallmatrix}
            \vertmat{\pm\dgridx[\kappa]_1}{0}, &
            \vertmat{0}{\pm\dgridy[\kappa]_2}, &
            \vertmat{\pm\dgridx[\kappa]_1}{\dgridy[\kappa]_2}, &
            \vertmat{\pm\dgridx[\kappa]_1}{-\dgridy[\kappa]_2}
        \end{bsmallmatrix}
        \right\}
\]
and $\kappa = \op{abs}({x_{k+1}}-{x_k})$, with $\op{abs}$ denoting the (element-wise) absolute value. For each value of $\alpha \in \{0.1, 0.5, 0.9\}$, we conducted 50 independent runs of the RRT* and RA-RRT* algorithms with five (circular) convex obstacles and a sixth non-convex obstacle formed by joining three circles of different radii. In addition, we performed simulation runs for $\sigma_{C_k} \in \{0.01, 0.05, 0.1, 0.5\}$. We selected these noise variance values using heuristics, along with the SSP problem's particulars, to capture a sufficiently-varied \revadd{uncertainty spectrum}. We also found that, for $\sigma_{C_k} << 0.01$, the noise becomes too imperceptible to be relevant in our analysis, and, conversely, for high noise variance (i.e., for values of $\sigma_{C_k} >> 0.5$), the problem becomes increasingly intractable.

\section{Results \& Discussions}\label{sec:resdis}
In this section, we discuss results recorded from our numerical experiments under representative subheadings that come next. \revadd{In \cref{tab:allresults},} we provide a comprehensive (quantitative) summary of our results. There, the {\scs{Failure Rate}} column header records the percentage of planner failure measured by how many times (\revadd{in 50 runs}) the planner failed to return a path within the maximum number of sampling iterations.
\begingroup
\begin{table}[t]
\def\firstcolW{*}
\centering
\caption{\sc Comparing the Path Lengths and Failure Rates of the RRT* and RA-RRT* Algorithms with Increasing Noise and for Different Risk Confidence Levels}\label{tab:allresults}
\begin{adjustbox}{max width=\columnwidth}
\begin{tabularx}{\columnwidth}{C{.8cm}C{1.1cm}C{1.2cm}C{.4cm}C{1cm}C{1.2cm}}
\toprule {{Algorithm}} & $\Lagr^*$ $\unit[per-mode = symbol]{[\meter]}$ & Computation Time $[s]$ & $\sigma_{C_k}$ &{\scs{Failure Rate}} & $\Lagr^\star_{\text{worst}} \pm \sigmaellstarsq [m]$ \\
\toprule \toprule \multirow{4}{\firstcolW}{\scriptsize $\text{RRT*}$} & 5.29 & 0.0143 & 0.01 & $16 \%$ & \multirow{4}{*}{\small $5.42\pm0.126$} \\
 & 5.26 & 0.0124 & 0.05 & $10 \%$ & \\
 & 5.22 & 0.0136 & 0.1 & $ 12 \%$ & \\
 & 5.41 & 0.0180 & 0.5 & $ 14 \%$ &  \\
\midrule \multirow{4}{\firstcolW}{\scriptsize $\text{RA-RRT*}_{0.1}$} & 5.29 & 0.21 & 0.01 & $6 \%$ & \multirow{4}{*}{\small $5.29 \pm 0.088$}\\
 & 5.25 & 0.29 & 0.05 & $6 \%$ &  \\
 & 5.22 & 0.24 & 0.1 & $6 \%$ & \\
 & 5.25 & 0.27 & 0.5 & $6 \%$ & \\
 \midrule \multirow{4}{\firstcolW}{\scriptsize $\text{RA-RRT*}_{0.5}$} & 5.26 & 0.26 & 0.01 & $8 \%$ & \multirow{4}{*}{\small $5.34 \pm 0.091$}\\
 & 5.29 & 0.28 & 0.05 & $8 \%$  & \\
 & 5.20 & 0.27 & 0.1 & $2 \%$  & \\
 & 5.34 & 0.19 & 0.5 & $6 \%$  & \\
 \midrule \multirow{4}{\firstcolW}{\scriptsize $\text{RA-RRT*}_{0.9}$} & 5.27 & 0.22 & 0.01 & $8 \%$ & \multirow{4}{*}{\small $5.32 \pm 0.103$} \\
 & 5.25 & 0.27 & 0.05 & $8 \%$  & \\
 & 5.18 & 0.26 & 0.1 & $4 \%$  & \\
 & 5.32 & 0.27 & 0.5 & $6 \%$ & \\
\toprule
\end{tabularx}
\end{adjustbox}
\end{table}
\endgroup
\subsection{Worst-Case SSP Planning Performance}\label{ssec:wcsspperf}
On \cref{tab:allresults}, we compare the mean and variance of the worst-case \revadd{path} lengths (i.e., path lengths corresponding to $\sigma_{C_k} = 0.5$) over 50 runs. From here, for the RA-RRT* algorithm, we notice a smaller variability in the mean path length as well as reduced variance, as opposed to the RRT* baseline. In addition, by comparing the trees and paths generated by both algorithms (see \cref{fig:overall}) with increasing noise variance and for a fixed confidence level ($\alpha = 0.9$), we notice a decrease in the connectivity of the generated nodes that ultimately leads to prolonged processing times and longer paths for the baseline. In contrast, for the RA-RRT* algorithm, we observe longer computation times but (shorter) gracefully-degraded path lengths. Lastly, by examining the failure rate column, we can infer that the RA-RRT* fails significantly less, even under increasing noise, unlike the RRT*. Finally, by inspecting the final column of \cref{tab:allresults}, we also notice reduced worst-case path lengths for the RA-RRT* \revadd{algorithm} than the baseline. 
\begin{figure*}[htb]
        \centering 
        \def\figW{0.32}
        \def\scaleW{1.1}
        \begin{subfigure}[b]{\figW\textwidth}
            \scalebox{\scaleW}{
            \begin{tikzpicture}[spy using outlines={circle,black,magnification=4,size=2cm, connect spies,every spy on node/.append style={ultra thick}}]
            \node {\includegraphics[interpolate,width=\textwidth]{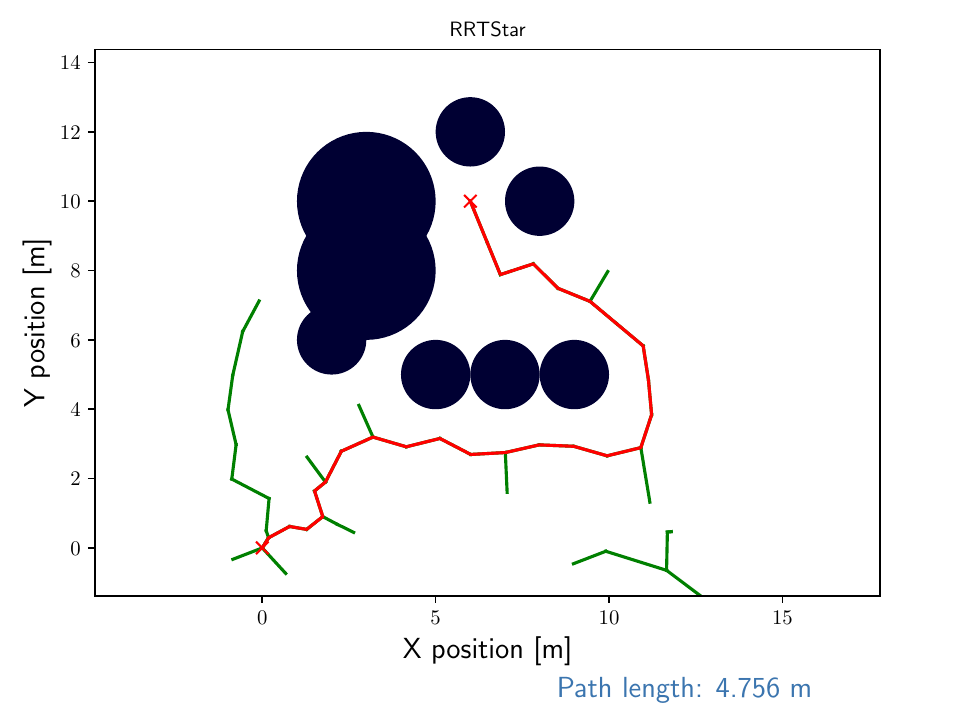}};
            \spy[every spy on node/.append style={ultra thick}] on (1.05,-0.9) in node [label={[shift={(0.19,-3.2)}]\small {\color{arxblue}$\bm{4\times}$}},left] at (2.4,1.8);
            \end{tikzpicture}}
            \caption{$\sigma_{C_k} = 0.05, \ \Lagr^\star = 4.756$ m.}\label{fig:rrtalpha09sigma05}
        \end{subfigure}
        \hfill
        \begin{subfigure}[b]{\figW\textwidth}
            \scalebox{\scaleW}{\includegraphics[interpolate,width=\textwidth]{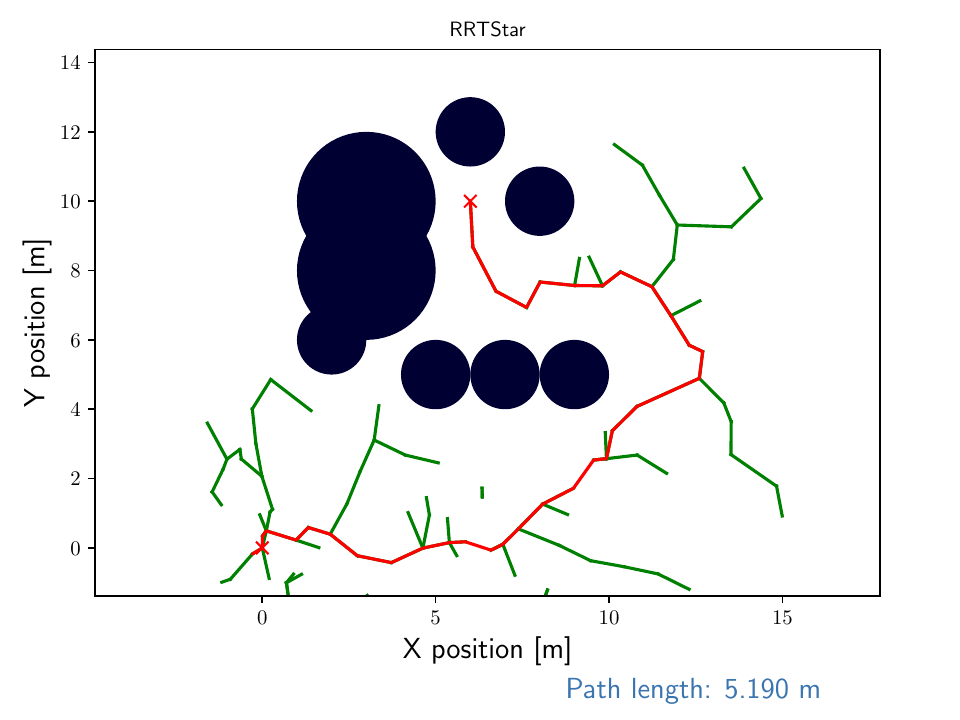}}
            \caption{$\sigma_{C_k} = 0.1, \ \Lagr^\star = 5.19$ m.}\label{fig:rrtalpha05sigma05}
        \end{subfigure}
        \hfill
        \begin{subfigure}[b]{\figW\textwidth}
            \scalebox{\scaleW}{
            \begin{tikzpicture}[spy using outlines={circle,black,magnification=5,size=1.8cm, connect spies,every spy on node/.append style={ultra thick}}]
            \node {\includegraphics[interpolate,width=\textwidth]{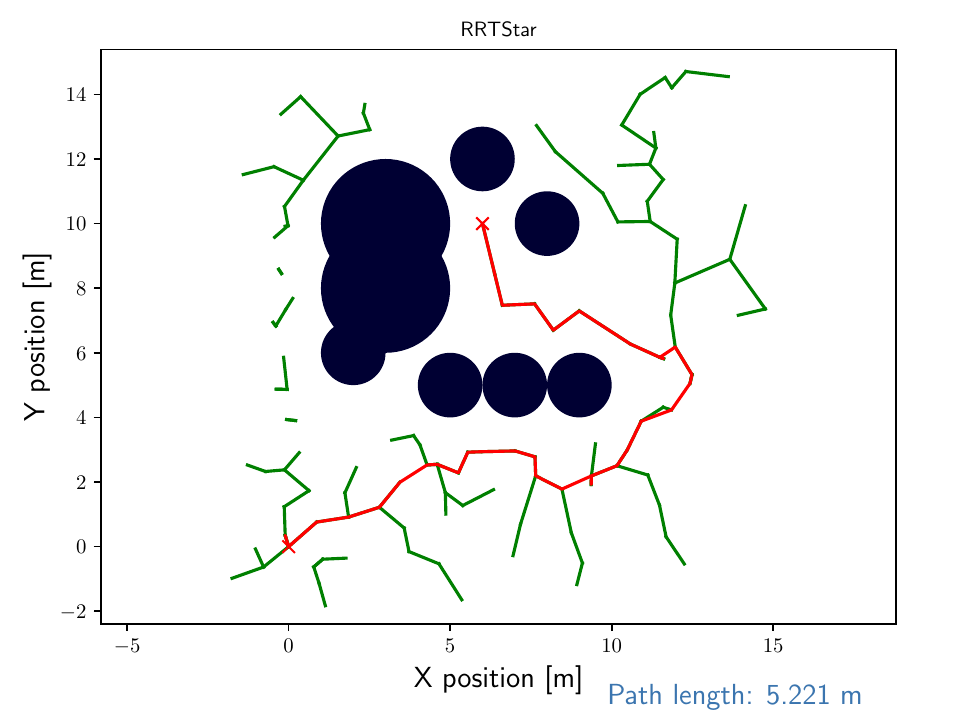}};
            \spy[every spy on node/.append style={ultra thick}] on (-1.2,0.42) in node [label={[shift={(0.15,-3)}]\small {\color{arxblue}$\bm{5\times}$}},left] at (2.8,1.5);
            \end{tikzpicture}
            }
            \caption{$\sigma_{C_k} = 0.5, \ \Lagr^\star = 5.221$ m.}\label{fig:planrrtalpha01sigma05}
        \end{subfigure}
        \\
        \begin{subfigure}[b]{\figW\textwidth}
            \scalebox{\scaleW}{\includegraphics[interpolate,width=\textwidth]{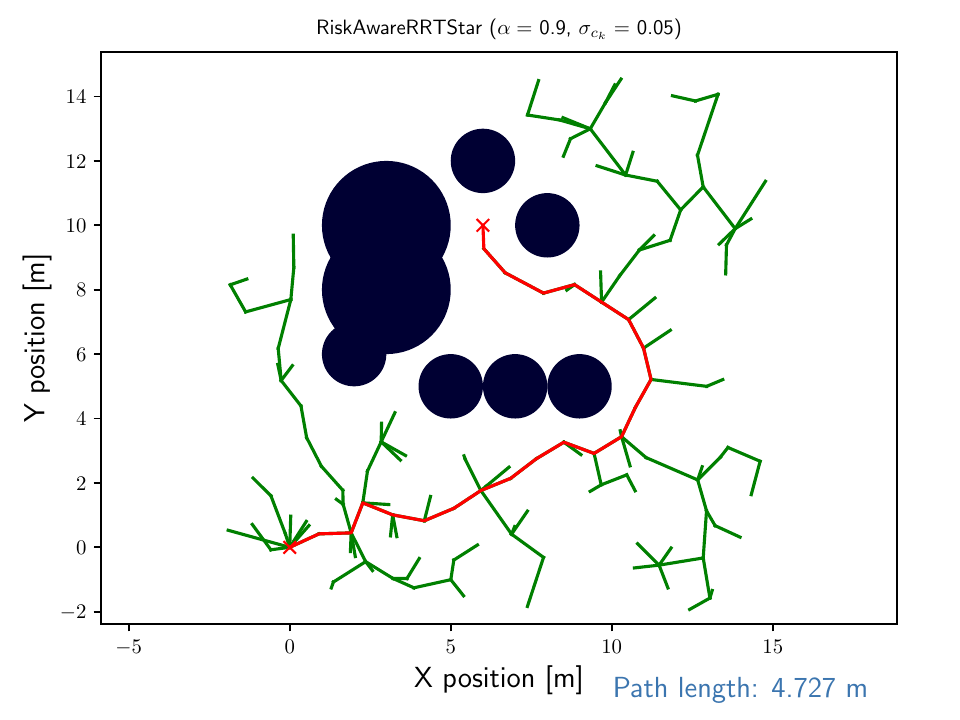}}
            \caption{$\sigma_{C_k} = 0.05, \ \Lagr^\star = 4.727$ m.}\label{fig:planrarrtalpha09sigma05}
        \end{subfigure}
        \hfill
        \begin{subfigure}[b]{\figW\textwidth}
            \scalebox{\scaleW}{\includegraphics[interpolate,width=\textwidth]{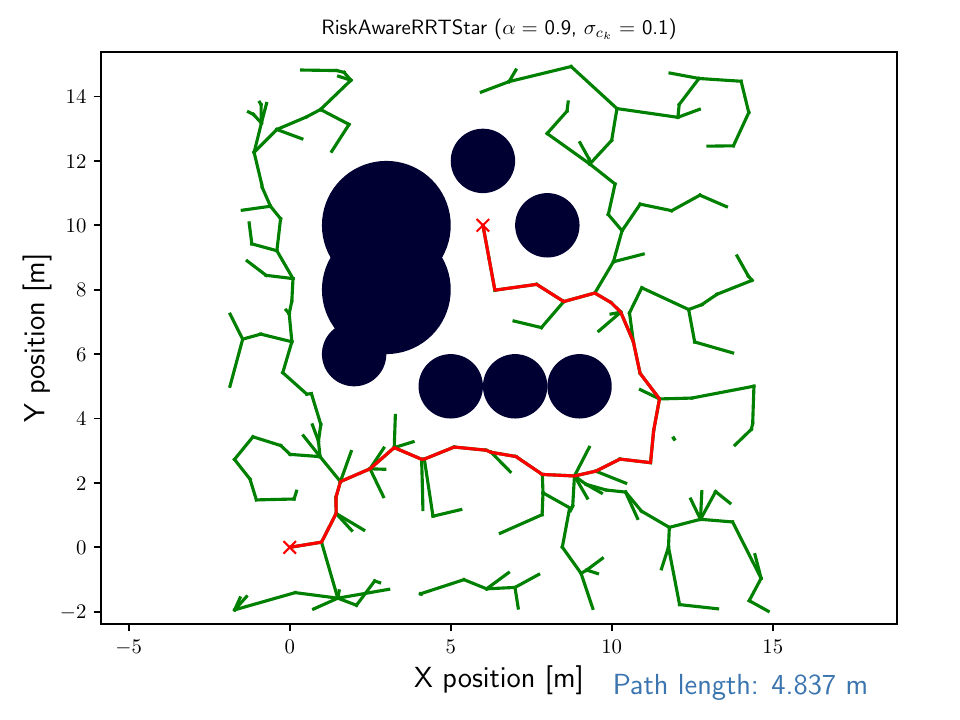}}
            \caption{$\sigma_{C_k} = 0.1, \ \Lagr^\star = 4.837$ m.}\label{fig:planrarrtalpha05sigma05}
        \end{subfigure}
            \hfill
        \begin{subfigure}[b]{\figW\textwidth}
            \scalebox{\scaleW}{\includegraphics[interpolate,width=\textwidth]{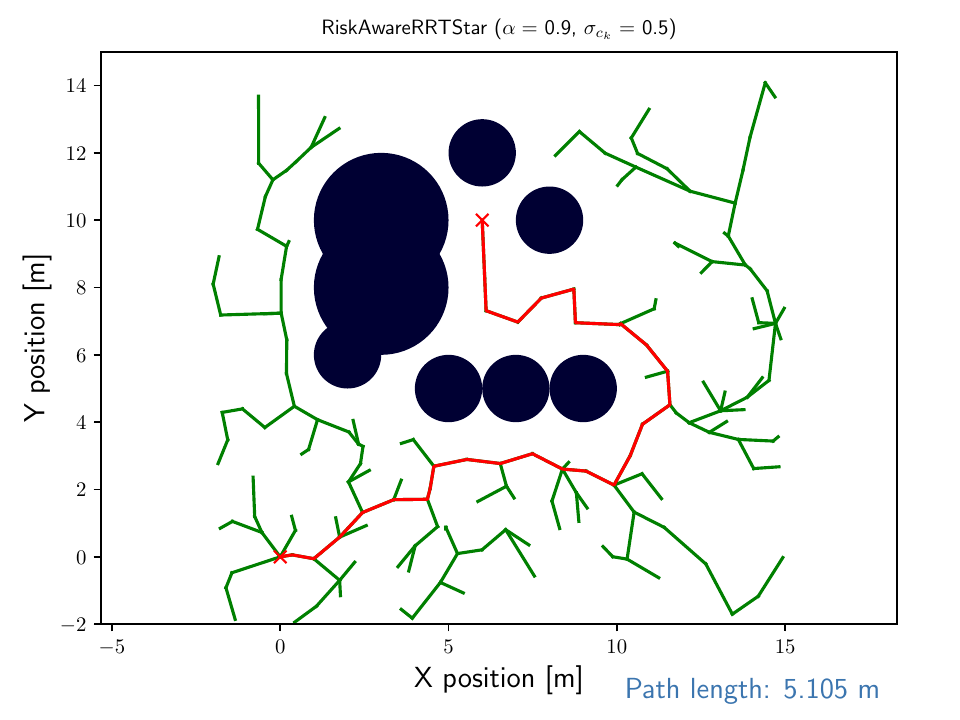}}
            \caption{$\sigma_{C_k} = 0.5, \ \Lagr^\star = 5.105$ m.}\label{fig:planrarrtalpha01sigma05}
        \end{subfigure}
\caption{\textbf{Visualizing the planned paths}: Example shortest paths returned by the RRT* (\textbf{top}) and RA-RRT* (\textbf{bottom}) algorithms for a fixed value of risk-sensitivity ($\alpha = 0.9$) parameter and increasing stochasticity ($\sigma_{C_k}$, \textbf{left-to-right}). Here, we see that, as the noise parameter is increased, the RA-RRT* planner's performance degrades gracefully with increasing uncertainty (evidenced by the slowly-increasing optimal path length, $\Lagr^\star$), while the RRT* planner returns infeasible paths (of greater lengths than the RA-RRT*) connecting configurations on a disconnected tree (see the magnified insets).}\label{fig:overall}
\end{figure*}

\subsection{Assessing the Cost of Risk-Sensitive SSP Planning}\label{ssec:costofrisk}Furthermore, from \cref{fig:overall} and \cref{tab:allresults}, we notice that, although the RA-RRT* algorithm takes slightly more computation time (in the order of $16.7, 16.8$, and $16.8$ for $\alpha= 0.1, 0.5$, and $0.9$, respectively), it is markedly less sensitive to variations in the noise parameter, evidenced by its shorter path lengths and reduced percentage of failure with increasing noise. We also observe from \cref{fig:mean_var}, that, under the RA-RRT* algorithm, the variance in the path length is equal or lower than that of the RRT*, for increasing stochasticity and for all three $\alpha$ values.
\begin{figure}[htb]
    \centering    \includegraphics[interpolate,width=.5\textwidth]{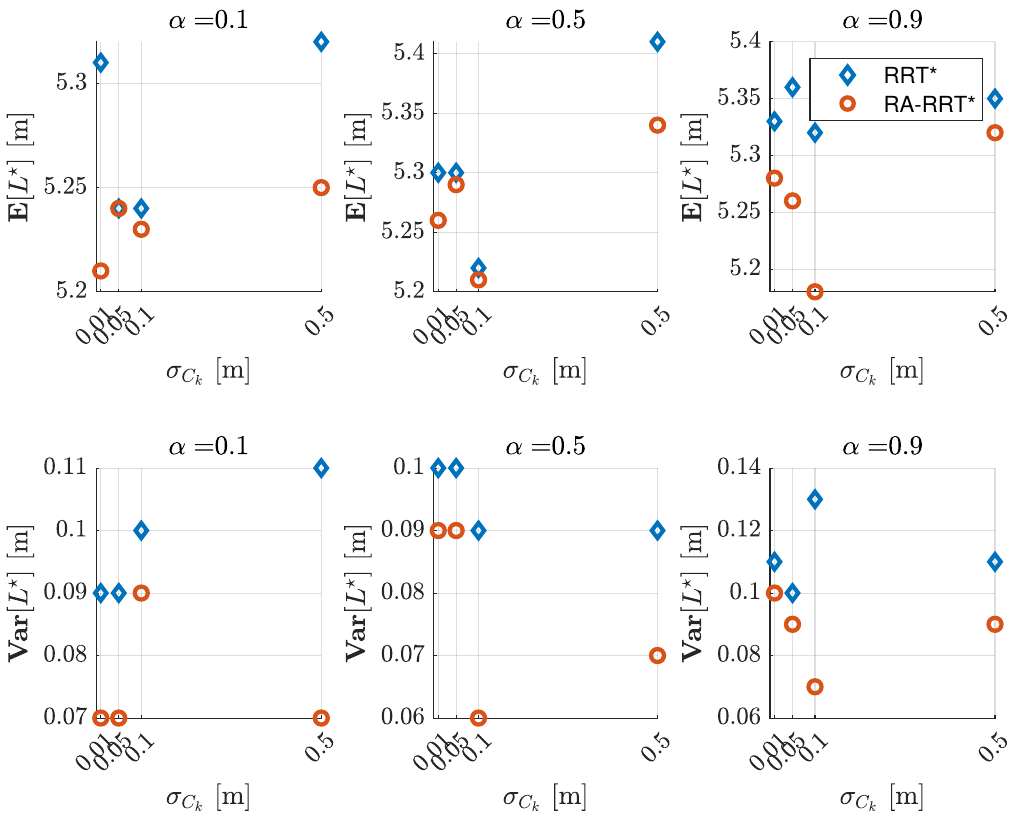}
    \caption{Mean and variance of the shortest path lengths obtained for different $\alpha$ and $\sigma_{C_k}$ values. The values of the mean path length and variance for the RA-RRT* algorithm ({\color{mplred} red $\circ$}) are either equal or better (lower) than that of the RRT* ({\color{mplblue} blue $\lozenge$}).}\label{fig:mean_var}
\end{figure}
Lastly, by examining the entries of \cref{tab:coststats}, it becomes evident that, true to our article's overarching problem, the RA-RRT* indeed minimizes the path length's $\cvar$, evidenced by its smaller values across the board compared to the standard RRT* algorithm. \revadd{Consequently, the expectation is also minimized (see the penultimate column of \cref{tab:coststats}), which provides evidence for the optimality of our approach}. 

\subsection{A Note On Performance at the Extremities of Uncertainty}\label{ssec:extrperf}
We conclude this section with a discussion on the planner performance at the extremities of uncertainty, i.e., for $\sigma_{C_k} = 0$ and $\sigma_{C_k} >> 0.5$. In the former case and for the same planning environment and problem setup, we expect the performance of the RA-RRT* algorithm to coincide with the baseline RRT*, since the VaR and CVaR become equal (see \cref{eq:varform,eq:cdfform,eq:cvarformck}). This is most likely to be the case, since in the noise-free case, our algorithm is essentially the $k_n$-RRT* algorithm which has been shown to have comparable computational efficiency with the RRT* \revadd{algorithm} (see \cite{karaman2011sampling}, Table 1). In the latter case, however, we anticipate a significantly-degraded performance for both algorithms, since finitude of the path segment lengths is assumed for tractability. Still, we expect a possibly less severe performance degradation for the RA-RRT* algorithm than the baseline.
\begin{table}[t]
    \centering
    \caption{\sc Cost Statistics Comparisons ($\sigma_{C_k} = 0.5$)}\label{tab:coststats}
    \begin{adjustbox}{max width=\textwidth}
    \begin{minipage}{\columnwidth}
    \renewcommand*\footnoterule{}
    \begin{savenotes}
    \begin{tabularx}{\columnwidth}{C{1.5cm}|XXXXC{1cm}|C{.3cm}}
    \toprule
      \mcrot{1}{l}{0}{Algorithm\footnote{As additional support for our results, we notice here that, consistent with the known VaR-CVaR relationship established in the literature \cite{rockafellar_optimization_2000}, for both values of $\alpha$, $\cvar[\revadd{\Lagr}] > \var[\revadd{\Lagr}]$.}} & \mcrot{1}{l}{60}{$\min{\op{VaR}_{0.1}}$} & \mcrot{1}{l}{60}{$\min{\op{VaR}_{0.9}}$} & \mcrot{1}{l}{60}{$\min{\op{CVaR}_{0.1}}$} & \mcrot{1}{l}{60}{$\min{\op{CVaR}_{0.9}}$} & \mcrot{1}{l}{60}{$\min\mathbb{E}[\Lagr]$} & \mcrot{1}{l}{0}{$\alpha$} \\
      \midrule
      \midrule
      \multirow{1}{*}{\small $\text{RRT*}$} 
                                    & 14.25 & 3.77 & 28.89 & 6.10 &	15.82 & - \\
     \midrule
     \multirow{3}{*}{\small $\text{RA-RRT*}$} 
                                    & 10.71 & 2.21 & 22.55 & 4.51 & 12.16 & 0.1 \\
                                    & 10.17 & 3.02 & 21.18 & 4.98 & 11.90 & 0.5 \\
                                    & 14.34 & 3.57 & 26.44 & 6.34 & 15.65 & 0.9 \\
     \bottomrule
    \end{tabularx}
    \end{savenotes}
    \end{minipage}
    \end{adjustbox}
\end{table}

\section{Concluding Remarks}\label{sec:conc}
In this article, we developed a probabilistically-robust sampling-based algorithm for solving SSP problems by adapting the RRT* algorithm to handle uncertainty in path-segment lengths through risk-sensitive optimization. Using formal arguments, algorithmic analyses, and results from exhaustive simulations of a grid-world path planning experiment, we demonstrated the utility of adopting risk in incremental sampling-based planning algorithms. We also presented a comprehensive computational complexity analysis of our RA-RRT* algorithm, which demonstrated that despite a slight increase in processing time, our approach maintains comparable query time and memory space complexity to the baseline RRT* algorithm, while significantly reducing planner failure rates and providing robustness to \revadd{environmental} uncertainty. In future work, we hope to adapt the foregoing ideas to the setting of stochastic environments with dynamic and possibly noisy obstacles. Other valid extensions entail developing similar ideas for non-Gaussian and non-additive noise or for the case where the CVaR cannot be computed precisely.

\bibliographystyle{ieeetr} 
\bibliography{ref}

\end{document}